\DeclareMathAlphabet{\pazocal}{OMS}{zplm}{m}{n}
\let\oldReturn\Return
\renewcommand{\Return}{\State\oldReturn}
\pgfplotsset{compat=1.5}
\newtheorem{theorem}{Theorem}
\newtheorem{lemma}[theorem]{Lemma}
\newtheorem{prop}[theorem]{Proposition}
\newtheorem{assume}{Assumption}
\DeclareMathAlphabet{\pazocal}{OMS}{zplm}{m}{n}
\newcommand{\ols}[1]{\mskip.5\thinmuskip\overline{\mskip-.5\thinmuskip {#1} \mskip-.5\thinmuskip}\mskip.5\thinmuskip} 
\renewcommand{\bar}[1]{\mskip.5\thinmuskip\overline{\mskip-.5\thinmuskip {#1} \mskip-.5\thinmuskip}\mskip.5\thinmuskip} 
\newcommand{\norm}[1]{\left\lVert#1\right\rVert}
\newcommand{\I}{\pazocal{I}}
\newcommand{\X}{\pazocal{X}}
\newcommand{\F}{\pazocal{F}}
\newcommand{\R}{\pazocal{R}}
\newcommand{\xs}{x^{\star}}
\newcommand{\A}{{\pazocal{A}}}
\newcommand{\C}{{\pazocal{C}}}
\newcommand{\D}{{\pazocal{D}}}
\newcommand{\B}{{\pazocal{B}}}
\newcommand{\E}{{\pazocal{E}}}
\renewcommand{\S}{{\pazocal{S}}}
\renewcommand{\P}{{\pazocal{ P}}}
\newcommand{\pxc}{\phi_{x_t, c_t}}
\newcommand{\psit}{{ \psi}_{x_\ti, \mu_t}}
\renewcommand{\det}{{\mathrm{det}}}
\newcommand{\good}{g}
\newcommand{\bad}{b}
\newcommand{\xti}{x_{t, i}}
\newcommand{\yti}{y_{t, i }}
\newcommand{\ti}{{t, i }}
\newcommand{\thetas}{\theta^{\star}}
\newcommand{\isyn}{{\mathrm{syn}}}
\newcommand{\inew}{{t, i}}
\newcommand{\trace}{{\mathrm{trace}}}
\newcommand{\Rb}{{\mathbb{R}}}
\renewcommand{\l}{{j-1}}
\newcommand{\Rgood}{\R_{\good}(T)}
\newcommand{\Rbad}{\R_{\bad}(T)}
\newcommand{\hii}{i}
\newcommand{\htt}{t}
\begin{document}
\title{\LARGE \bf Distributed Stochastic Bandit Learning with  Delayed Context Observation}
\author{Jiabin~Lin and Shana Moothedath,\IEEEmembership{~Member, IEEE}
 \thanks{J. Lin and  S. Moothedath  are with the Department of Electrical and Computer Engineering, Iowa State University, Ames, IA 50011, USA. \texttt{\{jiabin, mshana\}@iastate.edu}.}
}
\maketitle
\thispagestyle{empty}
\pagestyle{empty}
\begin{abstract}
We consider the problem where $M$ agents collaboratively interact with an instance of a stochastic $K$-armed contextual bandit, where $K \gg M$. The goal of the agents is to simultaneously minimize the cumulative regret over all the agents over a time horizon $T$.  We consider a setting where the exact context is observed after a delay and at the time of choosing the action the agents are unaware of the  context and only a distribution on the set of contexts is  available. Such a situation arises in different applications where at the time of the decision  the context needs to be predicted (e.g., weather forecasting or stock market prediction), and the context can be estimated once the reward is obtained.  We propose an Upper Confidence Bound (UCB)-based distributed algorithm and prove that our algorithm achieves regret and communications bounds of $O(d\sqrt{MT}\log^2 T)$  and $ O(M^{1.5} d^{3}) $, respectively, for linearly parametrized reward functions. 
We validated the performance of our algorithm via numerical simulations on synthetic data and real-world Movielens  data.
\end{abstract}

\begin{IEEEkeywords}
Bandit optimization, Distributed optimization, Stochastic linear bandits, Multi-Arm Bandit (MAB)
\end{IEEEkeywords}
\IEEEpeerreviewmaketitle

\section{Introduction}
Sequential decision making is a common problem in many applications, including control and robotics \cite{cheung2013autonomous, srivastava2014surveillance}, communications \cite{anandkumar2011distributed}, and ecology \cite{srivastava2013optimal}. 
Bandit algorithms provide a learning framework to model the sequential decision making problem where the learner interacts with the environment in several rounds and the goal of the learner is to choose the best action in each round to maximize the cumulative reward over a period of time   \cite{bubeck2012regret}. A popular variant of bandit algorithms is contextual bandits. In the standard contextual bandit model,  the learner  observes a context/feature vector, chooses an action and  receives a reward based on the context and the chosen action.  One of the key challenge in bandits is to balance the trade-off between exploring new actions in the pursuit of finding the best action and exploiting the known actions \cite{bubeck2012regret, lattimore2020bandit}.

Recently,  many papers studied MAB problems with multiple agents, where a set of agents/learners face the same MAB problem.  Collaboration among multiple agents  expedites the learning process in many applications that use contextual bandit algorithms, such as recommender systems, clinical trials, control and robotics, and cognitive radio  \cite{huang2021federated, wang2019distributed, landgren2021distributed}. However, often  the contexts are noisy or represent predictive measures, e.g., weather prediction or stock market prediction. In such scenarios, the exact contexts are not available  and learners  only observe a distribution on the set of contexts. There are many applications where the actions/decisions are made based on a prediction/distribution and the  exact contexts are  observed  after choosing an action   (e.g., we decide whether to take an umbrella based on the weather forecast and we know if it  rained later in the day). In such situations, the exact context is available to the learner after a delay and we refer to this MAB problem as {\em contextual bandits with delay}.

Our goal in this paper is to propose a communication cost-effective algorithm for distributed bandit learning with $M$ agents and delayed context observation.  Bandit learning with delayed  contexts is more challenging due to the fact that the learner do not have access to the  context information while choosing the action. In order to address this difficulty, we convert the problem using a feature mapping that is used in  \cite{kirschner2019stochastic}for a  single agent bandit problem. After modifying the problem, we add a new set of feature vectors such that the reward under this set of $d$-dimensional context feature vectors is an unbiased observation for the action selection. We propose a UCB-based {\em distributed} bandit algorithm with regret bound $ O(d \sqrt{M T} \log ^{2}(T)) $ for linearly parametrized reward functions; the order of our regret bound coincides with the regret bound of the distributed bandit algorithm in \cite{wang2019distributed} (\cite{wang2019distributed} assumed the exact contexts are known). Our setting recovers the distributed bandit algorithm with known contexts in \cite{wang2019distributed} when the context distribution is a Dirac delta distribution.

We note that there is a straightforward communication protocol for distributed bandit learning is {\em immediate sharing}  where each agent shares every new sample immediately with the other agents as noted in  \cite{wang2019distributed}. While the agents can achieve near-optimal regret under this protocol, the amount of communication data is directly proportional to the total size of gathered samples, rendering the problem non-scalable over large time horizons. To minimize the communication cost while retaining optimum regret, we use the observation in \cite{abbasi2011improved} and execute synchronization between agents only when the extra information accessible to the agents is significant when compared to the last synchronization. 

This paper makes the following contributions.  

\begin{enumerate}
\item[$\bullet$] We model a distributed stochastic linear bandits (LBs) problem  where   $M$ agents  collaborate to minimize their total regret under the coordination of a central server when the contexts are observed with a delay and are unknown while choosing the action. We refer to this problem as the {\em distributed LBs with delayed contexts}.

\item[$\bullet$] We present a UCB-based algorithm that achieves a $O(d\sqrt{MT}\log^2 T)$ high probability regret bound for distributed LBs with delayed contexts.

\item[$\bullet$] We validated the performance of our approach via numerical simulations on synthetic data and on the real world Movielens data.
\end{enumerate}

The rest of the paper is organized as follows. In Section~\ref{sec:prob} we present the notations and the problem formulation. In Section~\ref{sec:rel} we present the related work.  In Section~\ref{sec:sol-2} we present the algorithm and  regret analysis.  In Section~\ref{sec:sim} we present the simulation results and in Section~\ref{sec:con} we present the conclusion.

\section{Problem Setting and Notations}\label{sec:prob}

\subsection{Notations}
The norm of a vector $z \in \Rb^d$ with respect to a matrix $V \in \Rb^{d \times d}$ is defined as $\|z \|_{V} : = \sqrt{z^\top V z}$ and $|z|$ for a vector $ z$ denotes element-wise absolute values. Further, $\top$ denotes matrix or vector transpose and $\langle \cdot, \cdot \rangle$ denotes inner product. For an integer $N$, we define $[N]:=\{1,2,\ldots,N\}$.

\subsection{Problem Setting: Distributed Linear Stochastic Bandits with Context Distribution}
In this section, we first specify the standard  linear bandit problem below and then explain the distributed stochastic  bandit setting studied in this paper.
Let $\X$ be the action set, $\C$ be the context set, and the environment is defined by a fixed and unknown reward  function $y: \X \times \C \rightarrow \mathbb{R}$.
 In linear bandit setting, at any time $t \in \mathbb{N}$, the agent observes a context $c_t \in \C$ and  has to choose an action $x_t \in \X$. Each context-action pair $(x,c)$, $x \in \X$ and $c \in \C$, is associated with a feature vector $\phi_{x,c} \in  \mathbb{R}^d$, i.e., $\pxc = \phi(x_t, c_t)$. Upon selection of an action $x_t$, the agent observes a  reward $y_t \in  \mathbb{R}$
\begin{equation}
y_t :=  \langle\theta^\star, \phi_{x_t, c_t}  \rangle + \eta_t,\label{eq:reward}
\end{equation}
where $\theta^\star \in \mathbb{R}^d$ is the unknown reward parameter, $ \langle\theta^\star, \phi_{x_t, c_t}  \rangle  = r(x_t, c_t)$ is the expected reward for action $x_t$ at time $t$, i.e., $r(x_t, c_t) = \mathbb{E}[y_t]$, and $\eta_t$ is  $\sigma-$subGaussian, additive  noise. The goal is to choose optimal actions $\xs_t$ for all $t \in T$ such that  the cumulative reward, $\sum_{t=1}^T y_t$, is maximized. This
is equivalent to minimizing the cumulative (pseudo)-regret denoted as 
 \begin{equation}
 \R_T = \sum_{t=1}^T\langle\theta^\star, \phi_{\xs_t, c_t}^t  \rangle - \sum_{t=1}^T\langle\theta^\star, \phi_{x_t, c_t}^t  \rangle.\label{eq:regret}
 \end{equation}
 Here $\xs_t$ is the optimal/best action for context $c_t$ and $x_t$ is the action chosen by the agent for context $c_t$. 

 In this work, we consider a {\em distributed stochastic} linear bandit setting with  context distribution and unknown contexts. The communication network consists of a server and a set of $M$ agents, and the agents can communicate with the server by sending and receiving packets. We assume that the communication between the server and the agents have zero latency.  We consider a setting where the context at time $t$, $c_t$ is {\em unobservable} rather only a distribution of the context denoted as $\mu_t$ is observed by the agents. At round $t$, the environment chooses a distribution $\mu_t \in \P(\C)$ over the context set and samples a context realization $c_t \sim \mu_t$.
The agents observe only $\mu_t$ and not  $c_t$ and each agent selects an action, say action chosen by agent $i$ is $\xti$, and receive reward $\yti$, where $\yti= \langle \thetas, \phi_{\xti, c_t} \rangle + \eta_{t, i}$.  
Our aim is to learn an optimal mapping/policy $\P(\C) \rightarrow \X$ of contexts  to actions  such that the cumulative reward, $ \sum_{i=1}^M \sum_{t=1}^T \yti$ is maximized. Formally, our aim is to minimize the cumulative regret 

 \begin{equation}
 \R(T) =  \sum_{i=1}^M \sum_{t=1}^T\langle\theta^\star, \phi_{\xs_{\ti}, c_t} \rangle - \sum_{i=1}^M  \sum_{t=1}^T\langle\theta^\star, \phi_{\xti, c_t} \rangle.\label{eq:regret-sc}
 \end{equation}
Here, $\xs_t = \arg\max_{x \in \X} \mathbb{E}_{c \sim \mu_t}[r_{x, c}]$ is the best action provided we know $\mu_t$, but not $c_t$, and $T$ is the total number of rounds.

Consider the  set $\I = \{(1, 2, \cdots , T) \times (1, 2, \cdots , M)\}  = \{\I_1, \I_2, \ldots, \I_{MT} \}$, which is the set of all possible $(t, i)$ pairs for $t \in [T]$ and $i \in [M]$. For $\I_\l =(t, i) \in \I$, we have $i = \l \bmod M$, $t =\lceil \l / M \rceil$, and we define $ \F_{\l} := \left\{ (x_{s,q}, \mu_{s}, y_{s, q} )\right\}_{\{(s, q):(s<t) \vee(s=t \wedge q<i)\}} $. 
We note that in \eqref{eq:regret-sc}, we compete with the best possible mapping $\pi^\star: \P(\C) \rightarrow \X$ from the observed context distribution to actions, that maximizes the expected reward $\sum_{i=1}^M\sum_{t=1}^T \mathbb{E}_{c_t \sim \mu_t}[r_{\pi^\star(\mu_t), c_t}|\F_{\l}, \mu_t]$, where $\F_{\l}$ is the filtration that contains all information available at the end of round $j-1$.


Our goal is to develop a distributed multi-armed bandit algorithm with the least possible communication cost to solve this problem. We define the communication cost of a protocol as the number of integers or real numbers communicated between the server and the  agents \cite{wang2019distributed}.
We make the standard assumptions on the additive noise $\eta_t$ and the unknown parameter $\theta^\star$ \cite{kirschner2019stochastic, kazerouni2016conservative}.
 
 \begin{assume}\label{assume:noise}
Each element $\eta_t$ of the noise sequence $\{\eta_t\}_{t=1}^{\infty}$ is conditionally $\sigma-$subgaussian, i.e., 
\begin{eqnarray*}
\mbox{for~all~} \lambda \in \Rb,~ \mathbb{E}[e^{\lambda \eta_t}|\F_{\l}, \mu_t] \geqslant \exp({\lambda^2 \sigma^2}/{2}). 
\end{eqnarray*}
\end{assume}
\begin{assume}\label{assume:bound}
There exist constants $S, D \geqslant 0$ such that $\norm{\theta^\star}_2 \leqslant S$, $\norm{\phi_{x,c_t}}_2 \leqslant D$, and $\phi_{x, c_t}^{\top}\theta^\star \in [0,1]$, for all $t$ and all $x \in \X$.
\end{assume}
\section{Related Work}\label{sec:rel}
MAB algorithms are well studied and various solution methods were suggested, for a survey see \cite{bubeck2012regret} and \cite{lattimore2020bandit}.  Our work deals with the class of linear contextual MABs with unknown contexts.  Linear contextual bandit problems with context-dependent uncertainty are studied in \cite{kirschner2019stochastic, lamprier2018profile, yun2017contextual, lin2022stochastic}. In \cite{yun2017contextual}, a scenario was explored in which contexts are disturbed by noise and the goal is to compete with the optimal policy that can access the undisturbed feature vector. Reference \cite{kirschner2019stochastic} studied a setting in which only a distribution on the context is known, as opposed to the exact context, and the goal is to pick the optimal action according to the distribution function. The model in \cite{kirschner2019stochastic} is closely related to ours, and the primary distinction is that while \cite{kirschner2019stochastic} considered a single-agent MAB problem we study a {\em multi-agent} MAB problem. In our initial work \cite{lin2022stochastic} we studied a single-agent conservative contextual MAB problem where the contexts are unknown and the learner is constrained to satisfy certain performance criteria. 

Multiple-player MAB has gained more attention recently \cite{yi2020distributed}. One class of problem study distributed MABs with collisions, where the reward for an arm reduces or is set to zero if a player chooses that action in \cite{bistritz2018distributed, kalathil2014decentralized, rosenski2016multi, anandkumar2011distributed}. In the following work of \cite{kalathil2014decentralized},   \cite{nayyar2016regret} investigated a context in which regret rises as a result of agent communication. A collision-based approach is associated with problems in cognitive radio networks, where the goal is to learn through action collisions rather than communication. This is in stark contrast to the setting considered in our work.  In \cite{wang2019distributed}, agents face the same bandit model and communicate with a central server by sending and acquiring information in order to learn concurrently and collaboratively.  Our model is similar to the  time-varying action set case considered in \cite{wang2019distributed}.  In our scenario, however, contexts are observed after a delay, whereas in \cite{wang2019distributed} the contexts are observed before choosing the action.

\section{Distributed UCB for Linear Stochastic  Bandits with Context Observation} \label{sec:sol-2}

\subsection{Proposed Algorithm and Guarantee}
In this section, we present our algorithm and regret bound for the setting where the actual context $c_t$ (e.g., actual weather measurements) is observable to the agents after they choose their actions. We note that with the context observation the agents have  $ \left\{ (x_{s,i}, c_{s}, y_{s, i} )\right\}_{s=1}^t$ available to them while estimating $\hat{\theta}_\ti$ although not for selecting the action. The  pseudocode of our algorithm is given in Algorithm~\ref{alg:TV-2}.

Given the distribution $\mu_t$, we first construct the feature vectors $\Psi_t = \{{ \psi}_{x, \mu_t}: x \in \X\}$,  where $\{{ \psi}_{x, \mu_t} := \mathbb{E}_{c \sim \mu_t}[\phi_{x,c}]\}$ is the  expected feature vector of action $x$ under $\mu_t$. Each feature ${ \psi}_{x, \mu_t}$ corresponds to exactly one action $x \in \X$ and $\Psi_t $ denotes the feature context set at time $t$. Algorithm~\ref{alg:TV-2} is based on the {\em optimism in the face of uncertainty} principle, where at each time $t \in [T]$,   each agent $i \in [M]$ maintains a confidence set $\B_{\ti} \subseteq \Rb^d$ that contains the unknown parameter vector $\theta^\star$ with high probability. Each agent then  chooses an optimistic estimate $\tilde{\theta}_{\ti} = \arg\max_{{\theta} \in \B_\ti}~(\max_{x \in \X}~ {\psi}_{x, \mu_t}^{\top}{\theta})$ and  chooses an action $x_\ti = \arg\max_{x \in \X}  {\psi}_{x, \mu_t}^{\top}\tilde{\theta}_\ti$. Equivalently the agent chooses the pair
$(x_\ti, \tilde{\theta}_\ti) \in \arg\max\limits_{(x,{\theta}) \in \X \times \B_\ti}  { \psi}_{x, \mu_t}^{\top} {\theta}$ which jointly maximizes the reward. The agents now play their respective optimistic actions, $x_\ti$'s, and receive  rewards $y_\ti$'s and   utilize the reward observations and the now observable context to update their individual confidence set. 

We note that while choosing the action the agents are unaware of the context and hence the decisions are made using $\psi$  rather than $\phi$ (line~\ref{line:12}). In line~\ref{line:13} $y_\ti$ is  a noisy observation of $\phi_{x_\ti, c_t}^\top \thetas$ and the algorithm expects the reward  $\psi_{x_\ti, \mu_t}^\top \thetas$. To address this,  we  construct a feature set $\Psi_t$ in such a way that $y_\ti$ is an unbiased observation for the action choice $\psi_t$,   similar to the technique in \cite{kirschner2019stochastic} for single agent bandits. After the actions are chosen, the agents receive the respective rewards and the contexts are observable now. Hence in the estimation we utilize the information about the context. We denote $\sum_{t}\phi_{\xti, \mu_{t}}\phi_{\xti, \mu_{t}}^{\top}$ and  $\sum_t \phi_{\xti, \mu_{t}}y_{\ti}$ for each agent $i \in [M]$ as $W_{\inew}$ and $U_{\inew}$, respectively. We construct the confidence set $\B_\ti$ using $W_{\inew}$ and $U_{\inew}$ as 
\begin{align}\label{eq:conf-set}
\B_\ti & = \Big\{\theta \in \Rb^d:\|\hat{\theta}_{\ti}-\theta\|_{\bar{V}_{\ti}} \leqslant \beta_{\ti} \Big\}, 
\end{align} 
where $\beta_\ti  = \beta_\ti(\sigma, \delta) = \sigma \, \sqrt{2\log\Big(\dfrac{\det(\bar{V}_\ti)^{1/2}\det(\lambda I)^{-1/2}}{\delta} \Big)}+\lambda^{1/2}S$, $\bar{V}_\ti = \lambda I + W_{\inew}$, $\hat{\theta}_\ti = \bar{V}_\ti^{-1}U_{\inew}$.

The agents in our protocol share their local estimates with the central server during the synchronization step. The synchronizations are done at specific time instants. We refer to the timesteps between the two synchronizations as {\em epochs}.   The epochs are designed  based on the  observation  in \cite{abbasi2011improved} that the change in the determinant of $\bar{V}_t$ is a good indicator of learning progress. Based on this observation, we
only synchronize when agent $i$ finds that the log-determinant of $\bar{V}_\ti$ has changed more than a constant factor since
the last synchronization, and this reduces the communication cost of the algorithm.  
\begin{algorithm}[h!]
\caption{Distributed UCB for LBs with delayed contexts}\label{alg:TV-2}
\begin{algorithmic}[1] 
\State  \textit {\bf  Initialization:} $ B = (\frac{T \log MT}{d M}) $, $\lambda = 1$, $W_{\isyn}=0, U_{\isyn}=0, W_{\inew}=0, U_{\inew}=0, t_{last}=0, V_{last}=\lambda I$, for all $\ i=1, 2, \ldots, M$
\For{$t=1,2,\ldots, T$}
\State Nature chooses $\mu_t \in \P(\C) $
\State Learner observes $\mu_t$
\State Set $\Psi_t = \{{ \psi}_{x, \mu_t}: x \in \X\}$ where $\{{ \psi}_{x, \mu_t} := \mathbb{E}_{c \sim \mu_t}[\phi_{x,c}]\}$\label{step:psi}
\For{Agent $\ i=1,2,\ldots,M, $}
\State $\overline{V}_{\ti}=\lambda I+W_{\isyn}+W_{\inew} , \hat{\theta}_{\ti}=\bar{V}^{-1}_{\ti}(U_{\isyn}+U_{\inew})$
\State Construct the confidence ellipsoid $\B_{\ti}$ using $\overline{V}_{\ti}$ and $ \hat{\theta}_{\ti}$
\State $(x_{\ti}, \tilde{\theta}_{\ti})=\mathop{\arg\max}_{(x, \theta) \in \X \times \B_{\ti}} \left\langle \psi_{x, \mu_t}, \theta \right\rangle$\label{line:12}
\State Play $x_{\ti}$ and get the reward $y_{\ti}$  \label{line:13}
\State Update $W_{\inew}=W_{\inew}+\phi_{\xti, \mu_{t}}\phi_{\xti, \mu_{t}}^{\top}, U_{\inew}=U_{\inew}+\phi_{\xti, \mu_{t}}y_{\ti}$\label{line:update}
\State $V_{\ti} = \lambda I +W_{\isyn}+W_{\inew}$
\If {$\log(\det(V_{\ti})/\det(V_{last}))\cdot(t-t_{last}) \geqslant B$}
\State Send a synchronization signal to server to start a communication round
\EndIf
\State {\bf Synchronization round:}
\If {a communication round is started}
\State All agents $i \in [M]$ send $W_{\inew}$ and $U_{\inew}$ to server
\State Server computes $W_{\mathrm{syn}}=W_{\mathrm{syn}}+\sum_{i=1}^{M}W_{\inew}, U_{\mathrm{syn}}=U_{\mathrm{syn}}+\sum_{i=1}^{M}U_{\inew}$
\State All agents receive $W_{\mathrm{syn}}, U_{\mathrm{syn}}$ from the server
\State  Set $ W_{\inew}=0, U_{\inew}=0, t_{last}=t, $ for all $i$, $ V_{last}=\lambda I+W_{\mathrm{syn}} $
\EndIf
\EndFor
\EndFor
\end{algorithmic}
\end{algorithm}
\begin{theorem}\label{thm:main-2}
The cumulative regret of Algorithm~\ref{alg:TV-2} with expected feature set $\Psi_t$ and $\beta_\ti = \beta_\ti(\sigma, \delta/3)$  is bounded at time $T$ with probability (w. p.) at least $1-M\delta$ by
\begin{align*}
\R(T) \hspace*{-1 mm} &\leqslant \hspace*{-1 mm} 4\beta_T \sqrt{MT d \log (MT)}(1\hspace*{-1 mm}+\hspace*{-1 mm}\log(MT)) \hspace*{-0.75 mm}+ \hspace*{-0.75 mm}4 (\beta_T\hspace*{-0.75 mm}+\hspace*{-0.75 mm}1) \sqrt{2MT\log \frac{3}{\delta}}. 
\end{align*}
Further, for $\delta=\dfrac{1}{M^2T}$, Algorithm~\ref{alg:TV-2} achieves a regret of $ O(d \sqrt{M T} \log ^{2}(T)) $ with $ O(M^{1.5} d^{3}) $ communication cost. 
\end{theorem}
\begin{proof}
See Section~\ref{sec:regret-analysis2}.
\end{proof}
The significance of Theorem~\ref{thm:main-2} is that it allows us to use a smaller scaling $\beta_\ti$ for the confidence set, which indeed affects the action chosen by the algorithm. It is known that in practice $\beta_\ti$ has a large impact on the amount of the exploration, and a tighter choice of $\beta_\ti$ can result in a significant reduction of the regret bound \cite{kirschner2019stochastic}, which we validate through the experiments.
\subsection{Regret Analysis}\label{sec:regret-analysis2}
Proof of  Theorem~\ref{thm:main-2} relies on Lemma~\ref{LT1}, Proposition~\ref{LT2}, and  the two main results we present below,  Theorems~\ref{thm:good-2} and~\ref{thm:bad-2}. 

\begin{lemma} \label{LT1}
For any $\delta > 0$ w.p. $1-M\delta,$ $\theta^{\star}$ always lies in the constructed $\B_{\ti}$ for all $t$ and $i$. 
\end{lemma}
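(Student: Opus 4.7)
The plan is to reduce the claim to the self-normalized tail inequality of Abbasi-Yadkori, P\'al, and Szepesv\'ari (2011), applied \emph{per agent}, and then to union-bound over the $M$ agents. First I would rewrite the observed reward so that the regression model underlying $(\bar{V}_{\ti}, \hat{\theta}_{\ti})$ becomes a linear bandit with feature $\psi_{x_\ti, \mu_t}$ and a single aggregated noise:
\[
y_\ti \;=\; \langle \thetas, \psi_{x_\ti, \mu_t}\rangle + \tilde\eta_\ti, \qquad \tilde\eta_\ti \;:=\; \eta_\ti + \langle \thetas, \phi_{x_\ti, c_t} - \psi_{x_\ti, \mu_t}\rangle.
\]
Under this reparametrization, the confidence set $\B_\ti$ in \eqref{eq:conf-set} is exactly the ellipsoid produced by ridge regression on the $(\psi_{x_\ti, \mu_t}, y_\ti)$ pairs, so the self-normalized bound will apply directly once the noise condition is verified.

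Next I would check that $\{\tilde\eta_\ti\}$ is a zero-mean, conditionally sub-Gaussian sequence with variance proxy $4+\sigma^2$, matching the paper's choice $\rho = \sqrt{4+\sigma^2}$ in Theorem~\ref{thm:main-1}. The extra term $\langle \thetas, \phi_{x_\ti, c_t} - \psi_{x_\ti, \mu_t}\rangle$ has conditional mean zero by the definition $\psi_{x, \mu_t} = \mathbb{E}_{c \sim \mu_t}[\phi_{x, c}]$, and is bounded in an interval of length at most $2$ by Assumption~\ref{assume:bound}, so Hoeffding's lemma makes it sub-Gaussian with variance proxy at most $4$. Combined with the $\sigma$-sub-Gaussian $\eta_\ti$ of Assumption~\ref{assume:noise} (and using conditional independence of $\eta_\ti$ and $c_t$ given the history), $\tilde\eta_\ti$ is $\sqrt{4+\sigma^2}$-sub-Gaussian with respect to the filtration $\F_\l$ defined in Section~\ref{sec:prob}.

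Finally, fix an agent $i$. By construction, $\bar{V}_\ti$ and $\hat{\theta}_\ti$ are the regularized Gram matrix and ridge estimator built from the (lex-ordered) subsequence of samples available to agent $i$ at the start of round $t$: all synced data up to the most recent synchronization, together with agent $i$'s own local observations collected since. Applying Theorem~2 of Abbasi-Yadkori et al.\ to this subsequence with failure parameter $\delta$ yields
\[
\Pr\!\left(\forall\, t:\; \|\hat\theta_\ti - \thetas\|_{\bar{V}_\ti} \leq \beta_\ti\right) \;\geq\; 1 - \delta,
\]
and a union bound over $i \in [M]$ delivers the claimed $1-M\delta$. The main obstacle is the filtration bookkeeping in this last step: because $\bar{V}_\ti$ mixes synced data contributed by \emph{other} agents with agent $i$'s unsynced local samples, I have to argue that the restricted subsequence agent $i$ uses is adapted to a valid filtration under which each $\tilde\eta_{s,q}$ in it is still conditionally sub-Gaussian. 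Once that is set up, the remainder is a black-box invocation of the self-normalized bound.
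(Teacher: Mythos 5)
Your proposal is correct and follows essentially the same route as the paper: the paper's proof is precisely an application of Theorem~2 of Abbasi-Yadkori et al.\ per agent followed by a union bound over the $M$ agents, and the reparametrization $y_\ti = \langle\thetas,\psi_{x_\ti,\mu_t}\rangle + \zeta_t + \eta_t$ with $\zeta_t$ bounded and conditionally mean-zero, yielding a $\sqrt{4+\sigma^2}$-sub-Gaussian aggregate noise, is exactly the argument the paper spells out (in its proof of Lemma~\ref{lem:regret_ti}) to justify that the confidence-set construction remains valid with $\rho=\sqrt{4+\sigma^2}$.
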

\begin{proof}
The proof follows using Theorem~2 in \cite{abbasi2011improved} and union bound over all agents. 
\end{proof}
For a positive definite matrix $V \in \Rb^{d \times d}$,  we have the following result from \cite{abbasi2011improved}.
\begin{prop}[Lemma~11, \cite{abbasi2011improved}] \label{LT2}
Let $ \left\{X_t\right\}_{t=1}^{\infty} $ be a sequence in $ \mathbb{R}^{d}, V $ is a $ d \times d $ positive definite matrix and define $ \bar{V}_{t}=V+\sum_{s=1}^{t} X_s X_s^{\top} $. We have that
\[
\log \left(\frac{\det\left(\bar{V}_{n}\right)}{\det(V)}\right) \leqslant \sum_{t=1}^{n}\left\|X_t\right\|^2_{\bar{V}_{t-1}^{-1}}.\]
Further, if $ \left\|X_t\right\|_{2} \leqslant L $ for all $ t $, then
\[
\sum_{t=1}^{n} \min \left\{1,\left\|X_t\right\|_{V_{t-1}^{-1}}^{2}\right\}  \leqslant  2\left(\log \det\left(\bar{V}_{n}\right)-\log \det (V)\right)\]
\[\leqslant  2\left(d \log \left(\left(\trace(V)+n L^{2}\right) / d\right)-\log \det (V)\right).
\]
\end{prop}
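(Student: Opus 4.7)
The plan is to prove the three inequalities in sequence by reducing everything to a telescoping computation of $\det(\bar{V}_n)/\det(V)$ via the matrix determinant lemma, and then handling the $\min\{1,\cdot\}$ truncation and the final trace bound by two elementary scalar/AM-GM tricks.

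First I would show the telescoping identity. Since $\bar{V}_t = \bar{V}_{t-1} + X_t X_t^\top$ is a rank-one update of the positive definite matrix $\bar{V}_{t-1}$, the matrix determinant lemma gives
\[
\det(\bar{V}_t) = \det(\bar{V}_{t-1})\bigl(1 + X_t^\top \bar{V}_{t-1}^{-1} X_t\bigr) = \det(\bar{V}_{t-1})\bigl(1 + \|X_t\|_{\bar{V}_{t-1}^{-1}}^2\bigr).
\]
Taking the product from $t=1$ to $n$ and using $\bar{V}_0 = V$ yields $\det(\bar{V}_n)/\det(V) = \prod_{t=1}^n (1 + \|X_t\|_{\bar{V}_{t-1}^{-1}}^2)$. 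Taking logs and applying $\log(1+x) \leq x$ for $x \geq 0$ immediately gives the first displayed inequality.

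Next, for the middle inequality I would use the elementary scalar bound $\min\{1,x\} \leq 2\log(1+x)$ valid for all $x \geq 0$ (which one verifies by checking $2\log(1+x)-x$ is nonnegative on $[0,1]$ since its derivative $2/(1+x)-1$ is nonnegative there, and on $[1,\infty)$ by noting $2\log 2 > 1$ and monotonicity). Applied term-by-term with $x = \|X_t\|_{\bar{V}_{t-1}^{-1}}^2$ and combined with the telescoping identity from the previous paragraph,
\[
\sum_{t=1}^n \min\bigl\{1,\|X_t\|_{\bar{V}_{t-1}^{-1}}^2\bigr\} \leq 2\sum_{t=1}^n \log\bigl(1+\|X_t\|_{\bar{V}_{t-1}^{-1}}^2\bigr) = 2\bigl(\log\det(\bar{V}_n)-\log\det(V)\bigr),
\]
which is the middle inequality.

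Finally, for the last inequality I would apply AM-GM to the eigenvalues of $\bar{V}_n$: the determinant is the product of eigenvalues and the trace is their sum, so $\det(\bar{V}_n)^{1/d} \leq \trace(\bar{V}_n)/d$, i.e.\ $\log\det(\bar{V}_n) \leq d\log(\trace(\bar{V}_n)/d)$. Since $\trace(X_t X_t^\top) = \|X_t\|_2^2 \leq L^2$, linearity of trace gives $\trace(\bar{V}_n) \leq \trace(V) + nL^2$, and substituting yields the stated bound. I do not anticipate a serious obstacle: the whole argument is linear algebra plus one scalar inequality, and the only subtlety is checking that $\min\{1,x\} \leq 2\log(1+x)$ holds on the full range $x \geq 0$ (not merely on $[0,1]$), so that no a priori boundedness assumption on $\|X_t\|_{\bar{V}_{t-1}^{-1}}$ is needed.
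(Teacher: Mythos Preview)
Your proof is correct and is essentially the standard argument from the cited reference \cite{abbasi2011improved}; the paper itself does not supply a proof of this proposition, quoting it as Lemma~11 there, so there is nothing to compare beyond noting that your telescoping-plus-$\min\{1,x\}\leq 2\log(1+x)$-plus-AM--GM route is precisely the intended one.
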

The lemma below proves a bound on the per-step regret of the protocol.
\begin{lemma}\label{lem:observation}
In Algorithm~\ref{alg:TV-2}, with probability $ 1- \delta $, the single step pseudo-regret $ r_{\ti}=\left\langle\theta^{\star}, \phi_{x_\ti^{\star}, c_{t}}-\phi_{x_{\ti}, c_{t}}\right\rangle $ with $\beta_\ti = \beta_\ti(\sigma, \delta)$ is bounded by
\begin{align*}
r_{\ti} & \leqslant 2 \beta_{\ti} (\left\| \phi_{x_{\ti}, c_{t}} \right\|_{\bar{V}_{\ti}^{-1}} + \S_{j} ) + \D_{j}
\end{align*}
where $ \D_{j} = \left\langle\theta^{\star}, \phi_{x_\ti^{\star}, c_{\htt}} - \phi_{x_{\htt, \hii}, c_{\htt}} - \psi_{x_\ti^{\star}, \mu_{\htt}} + \psi_{x_{\htt, \hii}, \mu_{\htt}}\right\rangle$ and $ \S_{j} = \| \psi_{x_{\ti}, \mu_{t}} \|_{\bar{V}_{\ti}^{-1}} - \| \phi_{x_{\ti}, c_{t}} \|_{\bar{V}_{\ti}^{-1}}$.
\end{lemma}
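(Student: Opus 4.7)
The plan is to mirror the proof of Lemma~\ref{lem:regret_ti}, but exploit two structural differences that Algorithm~\ref{alg:TV-2} enjoys. First, because each agent uses the observed context $c_t$ to form $W_{\ti}$ and $U_{\ti}$ in \eqref{eq:update}, the reward model aligns exactly with the regression features: $y_\ti = \phi_{x_\ti,c_t}^\top\thetas + \eta_t$, with no heteroscedastic residual. Thus the standard self-normalized deviation bound (Theorem~2 of \cite{abbasi2011improved}) applies directly with the original $\sigma$-subGaussian noise, giving $\thetas\in\B_\ti$ with confidence radius $\beta_\ti(\sigma,\delta)$ (not the inflated $\sqrt{4+\sigma^2}$ of Lemma~\ref{lem:regret_ti}). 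This is where the tighter scaling comes from. Second, the action is still chosen by optimism against the feature $\psi_{x,\mu_t}$, so we must propagate $\psi$-based optimism through $\phi$-based confidence sets. The bridge between the two norms is precisely what the quantity $\S_j$ was defined to measure.

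I would begin by decomposing the per-step regret exactly as in \eqref{E8}: add and subtract $\langle\thetas,\psi_{x^\star_\ti,\mu_t}\rangle$ and $\langle\thetas,\psi_{x_\ti,\mu_t}\rangle$ to obtain
\[
r_\ti = \langle \thetas,\psi_{x^\star_\ti,\mu_t}-\psi_{x_\ti,\mu_t}\rangle + \D_j.
\]
Applying optimism from line~\ref{line:12-2}, which gives $\langle\tilde\theta_\ti,\psi_{x_\ti,\mu_t}\rangle \geq \langle\thetas,\psi_{x^\star_\ti,\mu_t}\rangle$, followed by Cauchy--Schwarz in the $\bar V_\ti$ norm yields
\[
r_\ti \leqslant \|\tilde\theta_\ti-\thetas\|_{\bar V_\ti}\,\|\psi_{x_\ti,\mu_t}\|_{\bar V_\ti^{-1}} + \D_j \leqslant 2\beta_\ti\,\|\psi_{x_\ti,\mu_t}\|_{\bar V_\ti^{-1}} + \D_j,
\]
where the last step uses that both $\thetas$ and $\tilde\theta_\ti$ lie in $\B_\ti$ on the event guaranteed by Lemma~\ref{LT1} (applied to the $\phi$-built design matrix with noise scale $\sigma$).

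The final step is purely algebraic: from the definition $\S_j = \|\psi_{x_\ti,\mu_t}\|_{\bar V_\ti^{-1}} - \|\phi_{x_\ti,c_t}\|_{\bar V_\ti^{-1}}$ we rewrite $\|\psi_{x_\ti,\mu_t}\|_{\bar V_\ti^{-1}} = \|\phi_{x_\ti,c_t}\|_{\bar V_\ti^{-1}} + \S_j$, and substituting gives the claimed bound $r_\ti \leqslant 2\beta_\ti(\|\phi_{x_\ti,c_t}\|_{\bar V_\ti^{-1}} + \S_j) + \D_j$.

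The main subtlety is justifying the use of the \emph{unaugmented} subGaussian constant $\sigma$ inside $\beta_\ti$. In Lemma~\ref{lem:regret_ti} the features $\psi$ were used both to build $\bar V_\ti$ and to learn, so the residual $(\phi-\psi)^\top\thetas$ entered as an extra $2$-subGaussian noise term, inflating $\rho$ to $\sqrt{4+\sigma^2}$. Here that residual disappears because the design matrix is built from $\phi_{x_\ti,c_t}$ itself, so the confidence-radius inflation is avoided. I expect this noise-model bookkeeping to be the only delicate part of the argument; the rest is a direct translation of \eqref{E8}--\eqref{E13} with the single replacement $\psi_{x_\ti,\mu_t}\leadsto \phi_{x_\ti,c_t} + (\text{defect}=\S_j)$.
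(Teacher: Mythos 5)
Your proposal is correct and follows essentially the same route as the paper's proof: decompose $r_\ti$ via the $\psi$ terms to isolate $\D_j$, apply optimism from line~\ref{line:12-2} and Cauchy--Schwarz in the $\bar V_\ti$-weighted norms to get $2\beta_\ti\|\psi_{x_\ti,\mu_t}\|_{\bar V_\ti^{-1}}$, then add and subtract $\|\phi_{x_\ti,c_t}\|_{\bar V_\ti^{-1}}$ to surface $\S_j$. Your explicit justification of why the unaugmented $\sigma$ suffices for $\beta_\ti$ (the design matrix and targets are now built from $\phi_{x_\ti,c_t}$, so the $2$-subGaussian residual of Lemma~\ref{lem:regret_ti} never appears) is the same point the paper makes implicitly by invoking Lemma~\ref{LT1} directly at the corresponding step.
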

\begin{proof}
Let us assume that $ \theta^{\star} \in \B_{\ti}$. Then we have

\begin{align}
r_{\ti} &=\left\langle\theta^{\star}, \phi_{x_\ti^{\star}, c_{t}}\right\rangle-\left\langle\theta^{\star}, \phi_{x_{\ti}, c_{t}}\right\rangle \nonumber \\
&=\langle\theta^{\star}, \psi_{x_\ti^{\star}, \mu_{t}}\rangle + \langle\theta^{\star}, \phi_{x_{\ti}^{\star}, c_{t}} - \psi_{x_{\ti}^{\star}, \mu_{t}}\rangle\nonumber\\
& - \langle\theta^{\star}, \psi_{x_{\ti}, \mu_{t}}\rangle - \langle\theta^{\star}, \phi_{x_{\ti}, c_{t}} - \psi_{x_{\ti}, \mu_{t}}\rangle \nonumber \\
& \leqslant \langle\ \tilde{\theta}_{\ti}, \psi_{x_{\ti}, \mu_{t}}\rangle - \langle\theta^{\star}, \psi_{x_{\ti}, \mu_{t}}\rangle\nonumber\\
& + \langle\theta^{\star}, \phi_{x_{\ti}^{\star}, c_{t}} - \phi_{x_{\ti}, c_{t}} - \psi_{x_{\ti}^{\star}, \mu_{t}} + \psi_{x_{\ti}, \mu_{t}}\rangle \nonumber \\
&= \langle\ \tilde{\theta}_{\ti} - \hat{\theta}_{\ti}, \psi_{x_{\ti}, \mu_{t}}\rangle + \langle\ \hat{\theta}_{\ti} - \theta^{\star}, \psi_{x_{\ti}, \mu_{t}}\rangle\nonumber\\
& + \langle\theta^{\star}, \phi_{x_{\ti}^{\star}, c_{t}} - \phi_{x_{\ti}, c_{t}} - \psi_{x_{\ti}^{\star}, \mu_{t}} + \psi_{x_{\ti}, \mu_{t}}\rangle \nonumber \\
&\leqslant \| \tilde{\theta}_{\ti} - \hat{\theta}_{\ti}\| \| \psi_{x_{\ti}, \mu_{t}} \| + \| \hat{\theta}_{\ti} - \theta^{\star}\| \| \psi_{x_{\ti}, \mu_{t}} \| + \D_{j} \nonumber\\
 &\leqslant \| \ols{V}_{\ti}^{-1/2} \left( \tilde{\theta}_{\ti} - \hat{\theta}_{\ti} \right) \|_{\ols{V}_{\ti}} \| \ols{V}_{\ti}^{1/2} \psi_{x_{\ti}, \mu_{t}} \|_{\ols{V}_{\ti}^{-1}} \nonumber\\
& + \| \ols{V}_{\ti}^{-1/2} \left( \hat{\theta}_{\ti} - \theta^{\star} \right) \|_{\ols{V}_{\ti}} \| \ols{V}_{\ti}^{1/2} \psi_{x_{\ti}, \mu_{t}} \|_{\ols{V}_{\ti}^{-1}} + \D_{j} \nonumber \\
&\leqslant \| \ols{V}_{\ti}^{-1/2} \|_{\ols{V}_{\ti}} \| \tilde{\theta}_{\ti} - \hat{\theta}_{\ti} \|_{\ols{V}_{\ti}} \| \ols{V}_{\ti}^{1/2} \|_{\ols{V}_{\ti}^{-1}} \| \psi_{x_{\ti}, \mu_{t}} \|_{\ols{V}_{\ti}^{-1}} \nonumber \\
&+ \| \ols{V}_{\ti}^{-1/2} \|_{\ols{V}_{\ti}} \|  \hat{\theta}_{\ti} - \theta^{\star} \|_{\ols{V}_{\ti}} \| \ols{V}_{\ti}^{1/2} \|_{\ols{V}_{\ti}^{-1}} \| \psi_{x_{\ti}, \mu_{t}} \|_{\ols{V}_{\ti}^{-1}} + \D_{j} \nonumber\\
&= (\| \tilde{\theta}_{\ti} - \hat{\theta}_{\ti} \|_{\ols{V}_{\ti}} + \|  \hat{\theta}_{\ti} - \theta^{\star} \|_{\bar{V}_{\ti}}) \| \psi_{x_{\ti}, \mu_{t}}\|_{\bar{V}_{\ti}^{-1}} + \D_{j} \nonumber\\
&\leqslant 2 \beta_{\ti} \| \psi_{x_{\ti}, \mu_{t}} \|_{\bar{V}_{\ti}^{-1}} + \D_{j} \label{s3:eq1}
\end{align}
\begin{align}
&= 2 \beta_{\ti} (\| \phi_{x_{\ti}, c_{t}} \|_{\bar{V}_{\ti}^{-1}} + \| \psi_{x_{\ti}, \mu_{t}} \|_{\bar{V}_{\ti}^{-1}} - \| \phi_{x_{\ti}, c_{t}} \|_{\bar{V}_{\ti}^{-1}} ) + \D_{j} \nonumber \\ 
& \leqslant 2 \beta_{\ti} (\left\| \phi_{x_{\ti}, c_{t}} \right\|_{\bar{V}_{\ti}^{-1}} + \S_{j} ) + \D_{j}.
\end{align}
Eq.~\eqref{s3:eq1} follows from Lemma~\ref{LT1}, where $ \| \hat{\theta}_{\ti} - \theta \|_{\bar{V}_{\ti}} \leqslant \sqrt{2 \log(\frac{\det(\bar{V}_{\ti})^{1/2}\det(\lambda I)^{-1/2}}{\delta})}+\lambda^{1/2} = \beta_{\ti}$.
\end{proof}
Below we present the Azuma-Hoeffdings inequality. 
\begin{prop} \label{prop:Azu}
(Azuma-Hoeffdings) Let $ M_{j} $ be a martingale on a filtration $ \F_{j} $ with almost surely bounded increments $ \left|M_{j}-M_{j-1}\right|<Q $. Then
\[
\mathbb{P}\left[M_{n}-M_{0}>b\right] \leqslant \exp \left(-\frac{b^{2}}{2 n Q^{2}}\right).
\]
\end{prop}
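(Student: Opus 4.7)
The plan is to prove this standard Azuma-Hoeffding bound via the Chernoff-bound machinery: exponentiate the deviation, apply Markov's inequality, and then control the moment generating function of the martingale by an iterated conditional MGF bound on the increments. Concretely, for any $\lambda > 0$,
\[
\mathbb{P}[M_n - M_0 > b] = \mathbb{P}\bigl[e^{\lambda(M_n - M_0)} > e^{\lambda b}\bigr] \leqslant e^{-\lambda b}\, \mathbb{E}\bigl[e^{\lambda(M_n - M_0)}\bigr],
\]
so it suffices to bound the moment generating function of the telescoped sum $M_n - M_0 = \sum_{j=1}^{n} X_j$, where $X_j := M_j - M_{j-1}$ are the martingale increments adapted to $\F_j$.

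The key ingredient is the conditional Hoeffding lemma. Since $X_j$ is $\F_j$-measurable, satisfies $\mathbb{E}[X_j \mid \F_{j-1}] = 0$ by the martingale property, and obeys $|X_j| < Q$ almost surely, convexity of $x \mapsto e^{\lambda x}$ on $[-Q, Q]$ combined with the mean-zero constraint yields the per-step bound
\[
\mathbb{E}\bigl[e^{\lambda X_j} \mid \F_{j-1}\bigr] \leqslant e^{\lambda^2 Q^2 / 2}.
\]
Iterating this estimate via the tower property, i.e.\ writing $\mathbb{E}[e^{\lambda(M_n - M_0)}] = \mathbb{E}\bigl[e^{\lambda(M_{n-1} - M_0)}\,\mathbb{E}[e^{\lambda X_n} \mid \F_{n-1}]\bigr]$ and unrolling the recursion, gives $\mathbb{E}[e^{\lambda(M_n - M_0)}] \leqslant e^{n \lambda^2 Q^2 / 2}$. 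Substituting back into the Markov bound produces $\mathbb{P}[M_n - M_0 > b] \leqslant \exp(-\lambda b + n \lambda^2 Q^2 / 2)$.

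The final step is to optimize the free parameter $\lambda$. Differentiating $-\lambda b + n \lambda^2 Q^2 / 2$ in $\lambda$ and setting the result to zero gives $\lambda^{\star} = b / (n Q^2)$, which, after substitution, yields exactly $\exp\bigl(-b^2 / (2 n Q^2)\bigr)$ as claimed. There is no substantive obstacle here: the only nontrivial piece is the conditional MGF bound for a bounded mean-zero random variable, which is classical and appears in any standard reference on concentration inequalities. Everything else is routine manipulation of conditional expectations and a one-dimensional optimization.
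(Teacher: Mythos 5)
Your proof is correct: the Chernoff--Markov exponentiation, the conditional Hoeffding lemma giving $\mathbb{E}[e^{\lambda X_j}\mid \F_{j-1}]\leqslant e^{\lambda^2 Q^2/2}$, the tower-property unrolling, and the optimization $\lambda^\star = b/(nQ^2)$ all check out and yield exactly the stated bound. The paper itself states this proposition as a standard known result without supplying a proof, and your argument is precisely the classical textbook derivation of the Azuma--Hoeffding inequality, so there is nothing to reconcile.
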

\begin{lemma}\label{lem:Dj-bound}
$ \D_{j} := \left\langle\theta^{\star}, \phi_{x_\ti^{\star}, c_{\htt}} - \phi_{x_{\htt, \hii}, c_{\htt}} - \psi_{x_\ti^{\star}, \mu_{\htt}} + \psi_{x_{\htt, \hii}, \mu_{\htt}}\right\rangle
$ is a martingale difference sequence with $ \left|\D_{j}\right| \leqslant 4 $ and $ \sum_{j} \D_{j} $ is a martingale. Further,  $ \sum_{j } \D_{j} $ is bounded w.p. at least  $1-\delta$  as  \[\sum_{j} \D_{j} \leqslant 4 \sqrt{2n \log \frac{1}{\delta}}.\]
\end{lemma}
\begin{proof}
Recall that for $ \D_{j} = \langle\theta^{\star}, \phi_{x_\ti^{\star}, c_{\htt}} - \phi_{x_{\ti}, c_{\htt}} - \psi_{x_\ti^{\star}, \mu_{\htt}} + \psi_{x_{\ti}, \mu_{\htt}}\rangle$, $\I_j=(\ti)$ with $\hii= \l \bmod M$, $\htt =\lceil \l / M \rceil$, and $ \F_{\l} = \left\{ (x_{s,q}, \mu_{s}, y_{s, q} )\right\}_{\{(s, q):(s<t) \vee(s=t \wedge q<i)\}} $. Thus we have 
{\scalefont{0.85}
\begin{align}
&\mathbb{E}_{c_{t} \sim \mu_{t}} [ \D_{j} \mid \F_{\l}, \mu_{\htt}, x_{\htt, \hii} ] \nonumber\\
&= \mathbb{E}_{c_{t} \sim \mu_{t}} [\langle\theta^{\star}, \phi_{x_\ti^{\star}, c_{\htt}} - \phi_{x_{\htt, \hii}, c_{\htt}} - \psi_{x_\ti^{\star}, \mu_{\htt}} + \psi_{x_{\htt, \hii} , \mu_{\htt}} \rangle \mid \F_{\l}, \mu_{\htt}, x_{\htt, \hii} ]\nonumber \\
&= \mathbb{E}_{c_{t} \sim \mu_{t}} [ \langle \theta^{\star}, \phi_{x_\ti^{\star}, c_{\htt}} \rangle \mid \F_{\l}, \mu_{\htt}, x_{\htt, \hii}   ]\nonumber\\
& - \mathbb{E}_{c_{t} \sim \mu_{t}} [ \langle \theta^{\star}, \phi_{x_{\htt, \hii}, c_{\htt}} \rangle \mid \F_{\l}, \mu_{\htt}, x_{\htt, \hii} ] \nonumber \\
&- \mathbb{E}_{c_{t} \sim \mu_{t}} [ \langle \theta^{\star}, \psi_{x_\ti^{\star}, \mu_{\htt}} \rangle \mid \F_{\l}, \mu_{\htt}, x_{\htt, \hii} ]\nonumber\\
& + \mathbb{E}_{c_{t} \sim \mu_{t}} [ \langle \theta^{\star}, \psi_{x_{\htt, \hii}, \mu_{\htt}} \rangle \mid \F_{\l}, \mu_{\htt}, x_{\htt, \hii} ] \nonumber\\
&= \langle \theta^{\star}, \psi_{x_\ti^{\star}, \mu_{\htt}} \rangle - \langle \theta^{\star}, \psi_{x_{\htt, \hii}, \mu_{\htt}} \rangle - \langle \theta^{\star}, \psi_{x_\ti^{\star}, \mu_{\htt}} \rangle + \langle \theta^{\star}, \psi_{x_{\htt, \hii}, \mu_{\htt}} \rangle = 0. \label{c_mu}
\end{align}
}
Eq.~\eqref{c_mu} follows from $ \psi_{x_{\htt, \hii} , \mu_{\htt}}=\mathbb{E}_{c_{\htt} \sim \mu_{\htt}}\left[\phi_{x_{\htt, \hii}, c_{\htt}} \mid \F_{\l}, \mu_{\htt}, x_\ti\right]$. Therefore $ \D_{j} $ is a martingale difference sequence with $ \left|\D_{j}\right| \leqslant 4 $ and $ \sum_{j =1}^{MT} \D_{j} $ is a martingale.  Using Proposition~\ref{prop:Azu} with $ Q=4 $ 
$
 \sum_{j} \D_{j} \leqslant  4 \sqrt{2n \log \frac{1}{\delta}}.
$
\end{proof}

 %
 %
  Below we prove $\sum_{j\in [MT]}\S_j$ is a supermartingale.
\begin{lemma}\label{lem:Sj}
Let  $ \S_{j} = \| \psi_{x_{\ti}, \mu_{t}} \|_{\bar{V}_{\ti}^{-1}} - \| \phi_{x_{\ti}, c_{t}} \|_{\bar{V}_{\ti}^{-1}}$. Then,  $ \sum_{j =1}^{MT} \S_{j} $ is a supermartingale with $ |\S_{j}| = 2 \lambda^{-1/2} $. Further, $ \sum_{j} \S_{j} $ is bounded with probability at least  $1-\delta$ as  \[\sum_{j} \S_{j} \leqslant 2\lambda^{-1/2} \sqrt{2 n \log \frac{1}{\delta}}.\]
\end{lemma}
\begin{proof}
Recall that $ \S_{j} = \| \psi_{x_{\ti}, \mu_{t}} \|_{\bar{V}_{\ti}^{-1}} - \| \phi_{x_{\ti}, c_{t}} \|_{\bar{V}_{\ti}^{-1}}$, $\I_j=(\ti)$  with $\hii= \l \bmod M$, $\htt =\lceil \l / M \rceil$, and $ \F_{\l} := \left\{ (x_{s,q}, \mu_{s}, y_{s, q} )\right\}_{\{(s, q):(s<t) \vee(s=t \wedge q<i)\}} $. Thus we have
\begin{align}
&\mathbb{E}_{c_{t} \sim \mu_{t}} [ \S_{j} \mid \F_{\l}, \mu_{t}, x_\ti ]\nonumber\\
 &= \mathbb{E}_{c_{t} \sim \mu_{t}} [\| \psi_{x_{\ti}, \mu_{t}} \|_{\bar{V}_{\ti}^{-1}}- \| \phi_{x_{\ti}, c{t}} \|_{\bar{V}_{\ti}^{-1}} \mid \F_{\l}, \mu_{t}, x_\ti] \nonumber\\
&= \mathbb{E}_{c_{t} \sim \mu_{t}} [\| \psi_{x_{\ti}, \mu_{t}} \|_{\bar{V}_{j}^{-1}} \mid \F_{\l}, \mu_{t}, x_\ti] \nonumber\\
& - \mathbb{E}_{c_{t} \sim \mu_{t}} [\| \phi_{x_{\ti}, c{t}} \|_{\bar{V}_{\ti}^{-1}} \mid \F_{\l}, \mu_{t}, x_\ti] \nonumber 
\end{align}
\begin{align}
&\leqslant \left\| \psi_{x_{\ti}, \mu_{t}} \right\|_{\bar{V}_{\ti}^{-1}} - \left\| \psi_{x_{\ti}, \mu_{t}} \right\|_{\bar{V}_{\ti}^{-1}} = 0 \label{s3:eq10}
\end{align}
Eq.~\eqref{s3:eq10} follows from $ \psi_{x_{\ti}, \mu_{t}} = \mathbb{E}_{c_{t} \sim \mu_{t}} [\phi_{x_{\ti}, c_{t}} \mid \F_{\l}, \mu_{t}, x_\ti] $, and $ \| \psi_{x_{\ti}, \mu_{t}} \|_{\bar{V}_{\ti}^{-1}} = \| \mathbb{E}_{c_{t} \sim \mu_{t}} [\phi_{x_{\ti}, c_{t}} \mid \F_{\l}, \mu_{t}, x_\ti] \|_{\bar{V}_{\ti}^{-1}} \leqslant \mathbb{E}_{c_{t} \sim \mu_{t}} [\| \phi_{x_{\ti}, c_{t}} \|_{\bar{V}_{\ti}^{-1}} \mid \F_{\l}, \mu_{t}, x_\ti] $ since $ \varphi (\mathbb{E} [X]) \leqslant \mathbb{E} [\varphi (x)] $  by  Jensen's inequality. 
\begin{align}
|\S_{j}| &= |\| \psi_{x_{\ti}, \mu_{t}} \|_{\bar{V}_{\ti}^{-1}} - \| \phi_{x_{\ti}, c_{t}} \|_{\bar{V}_{\ti}^{-1}} | \leqslant | \| \psi_{x_{\ti}, \mu_{t}} \|_{\bar{V}_{\ti}^{-1}} | + | \| \phi_{x_{\ti}, c_{t}} \|_{\bar{V}_{\ti}^{-1}} | \nonumber \\
&\leqslant | \mathbb{E}_{c_{t} \sim \mu_{t}} [\| \phi_{x_{\ti}, c_{t}} \|_{\bar{V}_{j}^{-1}} \mid \F_{\l}, \mu_{t}, x_\ti] | + | \lambda^{-1/2} | \label{s3:eq11} \\
&\leqslant | \lambda^{-1/2} | + | \lambda^{-1/2} | = 2 \lambda^{-1/2} \nonumber 
\end{align}
Eq.~\eqref{s3:eq11} follows from $ \psi_{x_\ti, \mu_{t}}=\mathbb{E}_{c_t \sim \mu_{t}}[\phi_{x_\ti, c_t} \mid \F_{\l}, \mu_{t}, x_\ti]$ and 
\begin{align}
\| \phi_{x_{\ti}, c_{t}} \|_{\bar{V}_{\ti}^{-1}} &= \sqrt{\phi_{x_{\ti}, c_{t}}^{\top} \bar{V}_{\ti}^{-1/2} \bar{V}_{\ti}^{-1/2} \phi_{x_{\ti}, c_{t}}} \nonumber \\
&= \| \phi_{x_{\ti}, c_{t}}^{\top} \bar{V}_{\ti}^{-1/2} \| \leqslant \| \phi_{x_{\ti}, c_{t}}^{\top} \| \| \bar{V}_{\ti}^{-1/2} \| \nonumber \\
&= \| \phi_{x_{\ti}, c_{t}}^{\top} \| \| (\lambda I + \sum_{t, \, i} \phi_{x_{\ti}, c_{t}} \phi_{x_{\ti}, c_{t}}^{\top})^{-1/2} \| \nonumber \\
&\leqslant \| \phi_{x_{\ti}, c_{t}}^{\top} \| \| ( \lambda I )^{-1/2} \| \leqslant \lambda^{-1/2}. \label{s3:eq12}
\end{align}
Eq.~\eqref{s3:eq12} follows from $ \bar{V}_{\ti}^{-1}  \leqslant \bar{V}_{t-1, i}^{-1}$.  Therefore $ \sum_{j =1}^{MT} \S_{j} $ is a supermartingale with $ |\S_{j}| = 2 \lambda^{-1/2} $.

Now from Proposition~\ref{prop:Azu} we have
\[
 \sum_{j} \S_{j} \leqslant  2\lambda^{-1/2} \sqrt{2 n \log \frac{1}{\delta}}.
\]
\end{proof}

Consider an arbitrary epoch  in Algorithm~\ref{alg:TV-2}, say the $p^{\rm th}$ epoch. Let $\E_p$ be the set of all $(t, i)$ pairs in epoch $p$ and $V_{p}$ be the $ V_{l a s t} $ in epoch $p$. Then we know
\[
 \frac{{\mathrm{det}(V_{p})}}{{\mathrm{det} (V_{p-1})}} = 1+ \sum_{(t,i)\in \A_p}\|\psit \|^2_{\bar{V}_{t-1, i}^{-1}}.
\]
 Assume  $\sum_{(t,i)\in \A_p}\|\psit \|^2_{\bar{V}_{t-1, i}^{-1}}\leqslant 1$. Then we have
\begin{equation}
1   \leqslant \frac{\mathrm{det}(V_{p})}{\mathrm{det} (V_{p-1})} \leqslant 2. \label{E16} 
\end{equation}
All the epochs that satisfy  \eqref{E16} are referred to as the {\em good} epochs. Similarly,  all the epochs that do not satisfy \eqref{E16} are referred to as the {\em bad} epochs. Our approach to prove  Theorem~\ref{thm:main-2} is to prove bounds for good epochs and bad epochs separately as described below.  Let us denote the  number of timesteps that belong to good epochs as $T_\good$ and the  number of  timesteps that belong to  bad epochs as $T_\bad$. 
 Let us denote the cumulative regret in all good epochs until time $T$ as $\R_{\good}(T) $ and the cumulative regret in all bad epochs until time $T$  as $\R_{\bad}(T)$. We present bounds for  $\R_{\good}(T) $  and $\R_{\bad}(T) $ separately and then use those bounds  to prove Theorem~\ref{thm:main-2}.
Our approach  uses similar argument in Theorem~4 in \cite{abbasi2011improved}.
\begin{theorem}\label{thm:good-2}
The cumulative regret of all good epochs in Algorithm~\ref{alg:TV-2} with expected feature set $\Psi_t$ and $\beta_\ti = \beta_\ti(\sigma, \delta/3)$  is bounded at time $T$ with probability at least $1-M\delta$ by
\[
\Rgood\leqslant 4\beta_T\sqrt { MT_\good d\log(MT)}+4(\beta_T+1)\sqrt{2MT_\good \log \frac{3}{\delta}}.  
\]
\end{theorem}
\begin{proof}
To bound the cumulative regret of good epochs we  use Theorem~4 in \cite{abbasi2011improved}. Assume that the $ MT $ pulls are all made by one agent in a round-robin fashion (i.e., the agent takes $ x_{1,1}, x_{1,2}, \ldots, x_{1, M}, x_{2,1}, \ldots, x_{2, M}, \ldots, x_{T, 1}, \ldots,  x_{T, M} $). We define $\widetilde{V}_{\ti} := \lambda I + \sum_{\{(p, q):(p<t) \vee(p=t \wedge q<i)\}} x_{p, q} x_{p, q}^{\top}$. Thus $ \widetilde{V}_{\ti}$  denotes $\bar{V}_{\ti} $ the imaginary agent calculates when the agent gets to $ x_{\ti} $. Since $ x_{\ti} $ is in a good epoch (say the $p$-th epoch), we have
\begin{align}
1 \leqslant \frac{\det (\widetilde{V}_{\ti})}{\mathrm{det} (\bar{V}_{\ti})} \leqslant \frac{\mathrm{det}(V_{p})}{\mathrm{det} (V_{p-1})} \leqslant 2 \label{s3:eq3} 
\end{align}
Eq.~\eqref{s3:eq3} uses $\mathrm{det}(\bar{V}_{\ti}) \geqslant \mathrm{det}(V_{p-1})$ and the fact that for good epochs $\mathrm{det}(\widetilde{V}_{\ti}) =\mathrm{det}(V_{p})$. Thus from Lemma~\ref{lem:observation} we have
\begin{align}
r_{\ti} & \leqslant 2 \beta_{\ti} \sqrt{ \phi_{x_{\ti}, c_{t}}^{\top} \bar{V}_{\ti}^{-1} \phi_{x_{\ti}, c_{t}}} + 2 \beta_{\ti} \S_{j} + \D_{j} \nonumber \\
& = 2 \beta_{\ti} \sqrt{\phi_{x_{\ti}, c_{t}}^{\top} \widetilde{V}_{\ti}^{-1} \phi_{x_{\ti}, c_{t}} \frac {\phi_{x_{\ti}, c_{t}}^{\top} \bar{V}_{\ti}^{-1} \phi_{x_{\ti}, c_{t}}} {\phi_{x_{\ti}, c_{t}}^{\top} \widetilde{V}_{\ti}^{-1} \phi_{x_{\ti}, c_{t}}}} + 2 \beta_{\ti} \S_{j} + \D_{j} \nonumber \\
& = 2 \beta_{\ti} \sqrt{\phi_{x_{\ti}, c_{t}}^{\top} \widetilde{V}_{\ti}^{-1} \phi_{x_{\ti}, c_{t}} \cdot \frac{\mathrm{det} (\widetilde{V}_{\ti})}{\mathrm{det} (\bar{V}_{\ti})}} + 2 \beta_{\ti} \S_{j} + \D_{j} \nonumber \\
& \leqslant 2 \beta_{\ti} \sqrt{2 \phi_{x_{\ti}, c_{t}}^{\top} \widetilde{V}_{\ti}^{-1} \phi_{x_{\ti}, c_{t}}} + 2 \beta_{\ti} \S_{j} + \D_{j} \label{s3:eq4} 
\end{align}
The last three steps follow from $\frac {x^{\top} A x} {x^{\top} B x} \leqslant \frac {\mathrm{det}(A)} {\mathrm{det}(B)}$ and \eqref{s3:eq3}.
We now use the argument for the single agent regret bound and prove regret bound for the good epochs.
Recall that $\E_{p}$ denotes the set of $ (t, i) $ pairs that belong to epoch $p$,  $ P_{\good} $ denotes the set of good epochs, and $\A_\good$ denotes the set of all $j \in [MT]$ that belong to good epochs. Using Proposition~\ref{LT2}, we bound 
\begin{align*}
\Rgood- \sum_{p \in P_{\good}}  \sum\limits_{\substack{(\ti) \in \E_{p}\\j:\I_j=(\ti)}} \Big(2\beta_\ti \S_{j} + \D_j\Big)
\end{align*}
\begin{align}
& = \sum_{p \in P_{good}}  \sum\limits_{\substack{(\ti) \in \E_{p}\\j:\I_j=(\ti)}} \Big(r_{\ti} - 2\beta_\ti\S_{j} -  \D_{j}\Big) \nonumber \\
&= \sqrt{\Big(\sum_{p \in P_{\good}}  \sum\limits_{\substack{(\ti) \in \E_{p}\\j:\I_j=(\ti)}} \Big(r_{\ti} - 2\beta_\ti\S_{j} -  \D_{j}\Big)\Big)^{2}} \nonumber 
\end{align}
\begin{align}
& \leqslant \sqrt{MT_\good \sum_{p \in P_{\good}}\sum\limits_{\substack{(\ti) \in \E_{p}\\j:\I_j=(\ti)}} \Big(r_{\ti} - 2\beta_\ti\ \S_{j} -  \D_{j}\Big)^{2}} \label{s3:eq5} \\
& \leqslant 2\beta_T \sqrt{ 2MT_\good \sum_{p \in P_{\good}} \sum_{(t, i) \in \E_{p}} \min ( \left\| \phi_{x_{\ti}, c_{t}} \right\|_{\widetilde{V}_{\ti}^{-1}}^{2}, 1 )}  \label{s3:eq6}\\
& \leqslant 2\beta_T \sqrt{ 2MT_\good\sum_{p \in P_{\good}} \sum_{(t, i) \in \E_{p}} \min ( \left\| \phi_{x_{\ti}, c_{t}} \right\|_{\widetilde{V}_{t-1, i}^{-1}}^{2}, 1 )} \label{s3:eq7}  \\
& \leqslant 2\beta_T \sqrt{ 2MT_\good \sum_{p \in P_{\good}} \log \Big(\frac{\mathrm{det} (V_{p} )}{\mathrm{det} (V_{p-1} )} \Big)} \label{s3:eq8} \\
& \leqslant 2\beta_T \sqrt { 2MT_\good \log \Big(\frac{\mathrm{det} (V_{P} )}{\mathrm{det} (V_{0} )} \Big)} \leqslant 4\beta_T \sqrt { MT_\good d  \log(MT)}  \label{s3:eq9}
\end{align}
In the  inequalities above, \eqref{s3:eq5} uses $(\sum_{i=1}^{n} z_{i})^{2} \leqslant n \sum_{i=1}^{n} z_{i}^{2}$ for $z_i \in \mathbb{R}$,  \eqref{s3:eq6} follows from \eqref{s3:eq4},  \eqref{s3:eq7} follows from $ \bar{V}_{\ti}^{-1} \leqslant \bar{V}_{i, t-1}^{-1}$, and \eqref{s3:eq8} follows from Proposition~\ref{LT2}. Finally,  \eqref{s3:eq9} follows from $
\log (\frac{\mathrm{det} (V_{P} )}{\mathrm{det} (V_{0} )} )  \leqslant d \log (MT)^{2}.$

Now from Lemma~\ref{lem:Sj} ($\lambda=1$) and Lemma~\ref{lem:Dj-bound} with $ n = MT_\good $ and after applying a union bound, we can rewrite  \eqref{s3:eq9} as below with $\beta_\ti = \beta_\ti(\sigma, \delta/3)$
%
\begin{align}
\Rgood 
&\leqslant  4\beta_T \sqrt { MT_\good d  \log(MT)}  + 4(\beta_T+1)\sqrt{2MT_\good \log \frac{3}{\delta}}.\nonumber
\end{align}
%
\end{proof}
\begin{theorem}\label{thm:bad-2}
The cumulative regret of all bad epochs in Algorithm~\ref{alg:TV-2} with expected feature set $\Psi_t$ and $\beta_\ti = \beta_\ti(\sigma, \delta/3)$  is bounded at time $T$ with probability at least $1-M\delta$ by
\[
\Rbad\leqslant 4d\log(MT)(\beta_T M\sqrt{B})+4(\beta_T+1) \sqrt{2MT_\bad\log \frac{3}{\delta}}.
\]
\end{theorem}
\begin{proof}
Let $p$ be a bad epoch. Assume that  a bad  epoch  starts at time step $ t_{0} $ and that the length of the epoch is $ n $. Then agent $ i $ proceeds as $ \bar{V}_{t_{0}, i}, \ldots, \bar{V}_{t_{0}+n, i} $. Using Lemma~\ref{lem:observation} regret in this epoch, denoted as $\R_p$, satisfies
\begin{align*}
\R_p - \sum\limits_{\substack{ (\ti) \in \E_{p}, \\j:\I_j=(\ti)}} \beta_\ti\S_{j} - \sum\limits_{\substack{j : \I_j=(\ti)\\ \& (\ti) \in \E_{p}}}\D_{j}
\end{align*}
\begin{align}
&\leqslant 2\beta_T  \sum_{i=1}^{M} \sum_{t=t_{0}}^{t_{0} + n} \min \{\| \phi_{x_{\ti}, c_{t}} \|_{\bar{V}_{\ti}^{-1}}, 1 \}\nonumber \\
&=  2\beta_T  \sum_{i=1}^{M} \sqrt { ( \sum_{t=t_{0}}^{t_{0} + n} \min \{\| \phi_{x_{\ti}, c_{t}} \|_{\bar{V}_{\ti}^{-1}}, 1 \} )^{2}} \nonumber \\
&\leqslant \beta_T  \sum_{i=1}^{M} \sqrt {n \sum_{t=t_{0}}^{t_{0} + n} \min \{\| \phi_{x_{\ti}, c_{t}} \|_{\bar{V}_{\ti}^{-1}}^{2}, 1 \} } \nonumber
\end{align}
\begin{align}
&\leqslant 2 \beta_T  \sum_{i=1}^{M} \sqrt {n \sum_{t=t_{0}}^{t_{0} + n} \min \{\| \phi_{x_{\ti}, c_{t}} \|_{\bar{V}_{t-1, i}^{-1}}^{2}, 1 \} } \nonumber \\
& \leqslant 2 \beta_T  \sum_{i=1}^{M} \sqrt{n \log \frac{\mathrm{det} (V_{t_{0}+n, i})}{\mathrm{det} (V_{last})}} \leqslant 2\beta_T M \sqrt{B}\label{eq:bad-bound-2}
\end{align}
The last three steps in the inequalities above follow from $(\sum_{i=1}^{n} z_{i})^{2} \leqslant n \sum_{i=1}^{n} z_{i}^{2}$, $ \bar{V}_{\ti}^{-1}  \leqslant \bar{V}_{i, t-1}^{-1} $,  Proposition~\ref{LT2}, and the fact that all agents in the bad epochs satisfy $ n \log  ({\det (V_{t_{0}+n, i})} /{\det (V_{last})}) < B $. 
We know  the number of bad epochs is at most $R= \left\lceil d\log(MT)^2 \right\rceil$.
Thus from Lemma~\ref{lem:Dj-bound} with $ n = MT_\bad$ and Lemma~\ref{lem:Sj} ($\lambda=1$) and after applying a union bound, we can rewrite  \eqref{eq:bad-bound-2} with $\beta_\ti = \beta_\ti(\sigma, \delta/3)$ as
\begin{align}
\Rbad &\leqslant \Big (2\beta_T  M  \sqrt{B} \Big) R + 4 (\beta_T + 1) \sqrt{2 M T_\bad \log \frac{3}{\delta}} \nonumber \\
&\leqslant 4 d \log(MT) \Big (\beta_T  M \sqrt{B}\Big) + 4 (\beta_T + 1) \sqrt{2 M T_\bad \log \frac{3}{\delta}}. \nonumber 
\end{align}
%
\end{proof}

\begin{proof}[Proof of Theorem~\ref{thm:main-2}]
Using Theorem~\ref{thm:good-2} and Theorem~\ref{thm:bad-2} and following  similar steps as in the proof of Theorem~\ref{thm:main-2} along with $ B = (\frac{T \log MT}{d M}) $ and  $ T > M $ we get
\begin{align*}
&\R(T) = \R_\good(T)+\R_\bad(T)\\
& \leqslant 4\beta_T \sqrt { MT_\good d  \log(MT)}  +4 (\beta_T+1)\sqrt{2MT_\good \log \frac{3}{\delta}} \\
& + 4 d \log(MT) \Big (\beta_T  M \sqrt{B}\Big)+ 4 (\beta_T + 1) \sqrt{2 M T_\bad \log \frac{3}{\delta}} \\
& = 4 \beta_T \sqrt{MT d \log (MT)}+ 4 d \log(MT) \Big (\beta_T  M \sqrt{B}\Big)\\
& + 4 (\beta_T + 1)  \sqrt{2 M T \log \frac{3}{\delta}}.
\end{align*}
Let us choose $ B = (\frac{T \log MT}{d M}) $, then 
\begin{align*}
 \R(T) \leqslant & 4 \beta_T \sqrt{MT d \log (MT)}(1+ \log (MT)) \\
 &+ 4 (\beta_T +1) \sqrt{2 M T \log \frac{3}{\delta}}\\
\hspace*{-15 mm}&= O(d\sqrt{MT}\log(MT))+O(d\sqrt{MT}\log^2(MT)).
\end{align*}
The last step uses $\beta_T = O \Big( \sqrt{ d \log(\frac{ T }{ \delta})} \Big) $ with $\delta = 1/M^2T$.
 Since $ T > M $, we have
\begin{align*}
\R(T) &= O (d \sqrt{M T} \log (T) ) + O (d \sqrt{M T} \log ^{2}(T) )\\
& = O (d \sqrt{M T} \log ^{2}(T) ). 
\end{align*}
\begin{figure*}[ht!]
 \subcaptionbox{Synthetic data: exact and delayed with $M=3$\label{fig:1}}{\includegraphics[width=0.33\textwidth]{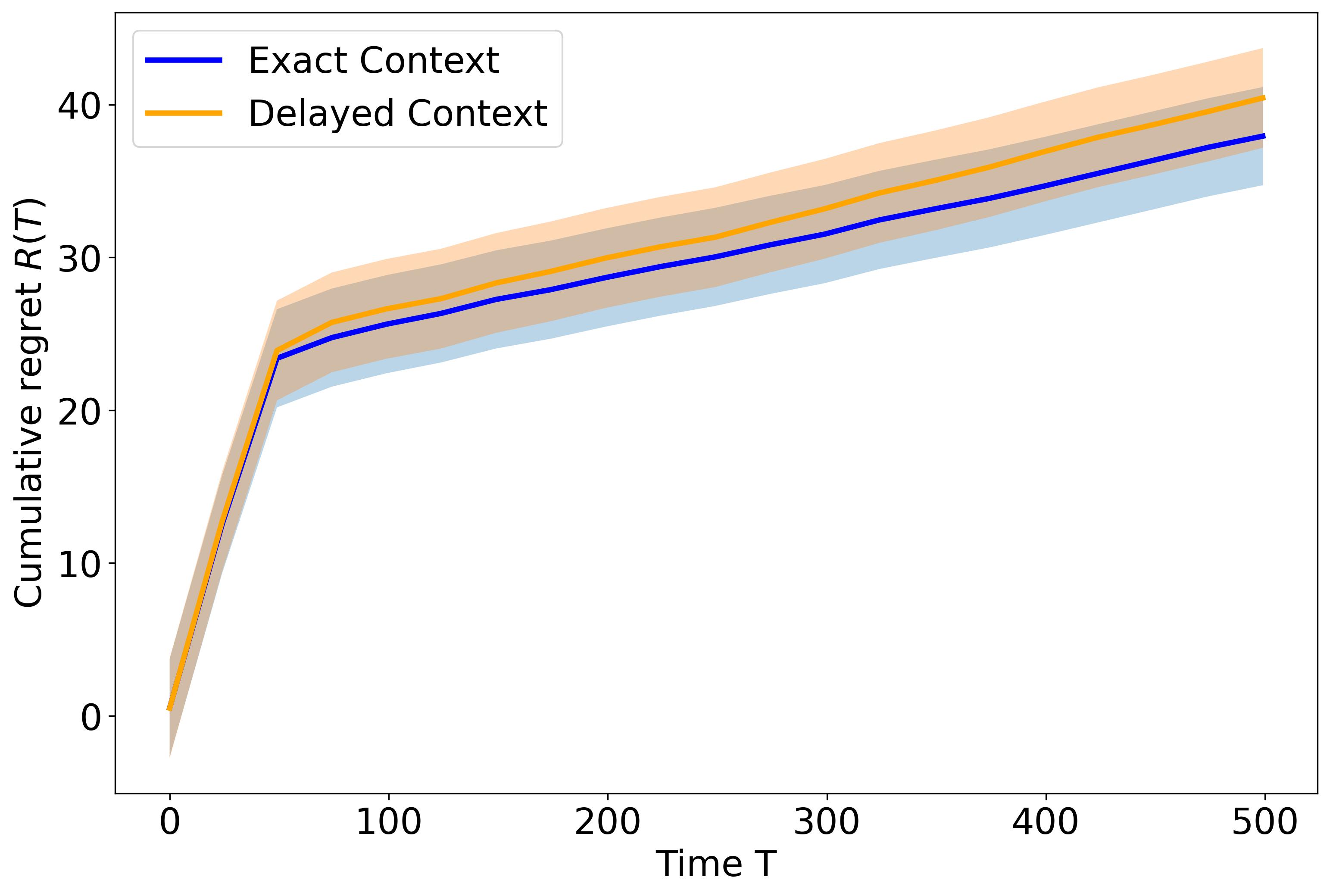}}
 \subcaptionbox{Synthetic data: $M=1, 3, 5$\label{fig:2}}{\includegraphics[width=0.33\textwidth]{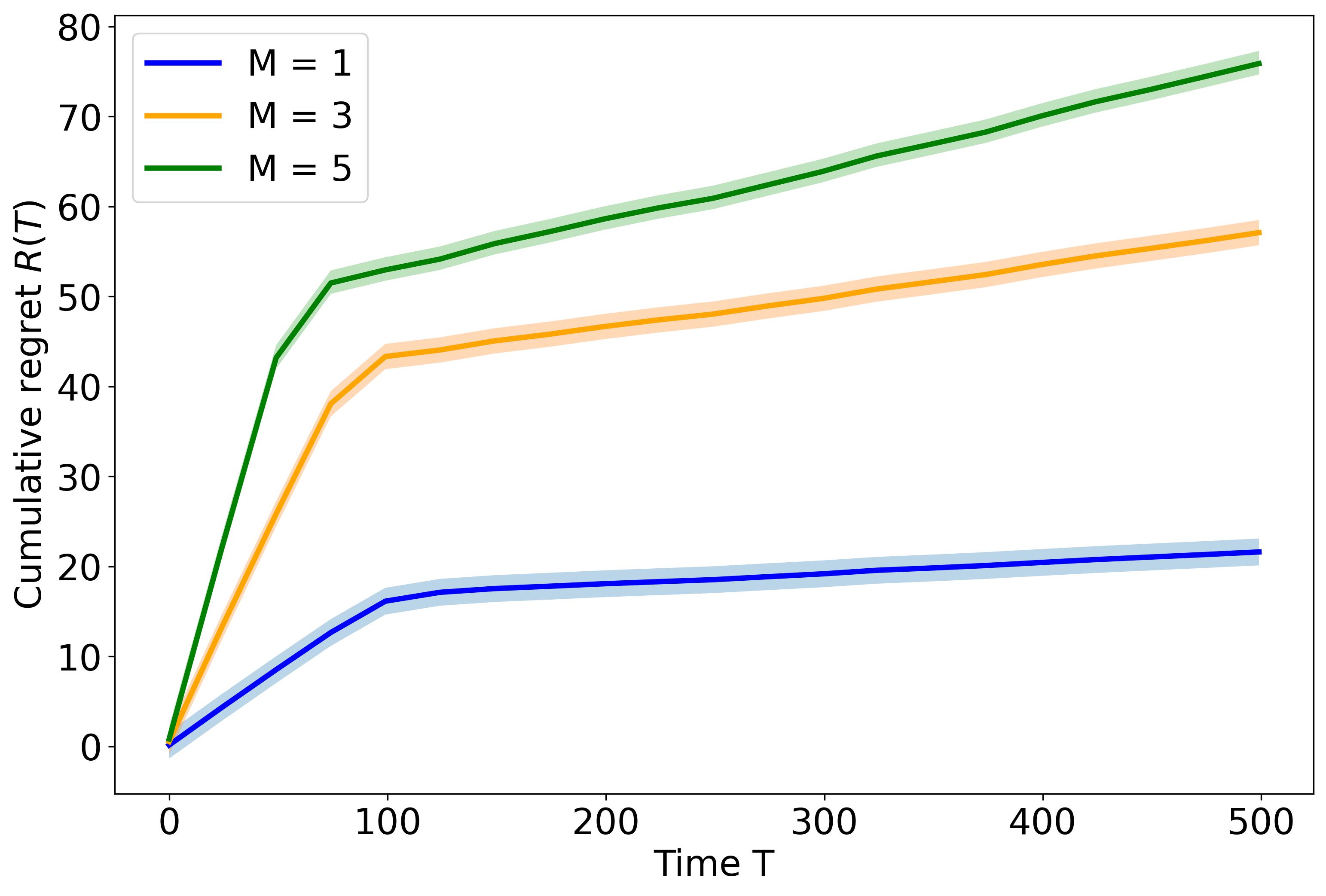}}
 \subcaptionbox{Movielens data: exact and delayed with $M=3$\label{fig:3}}{\includegraphics[width=0.33\textwidth]{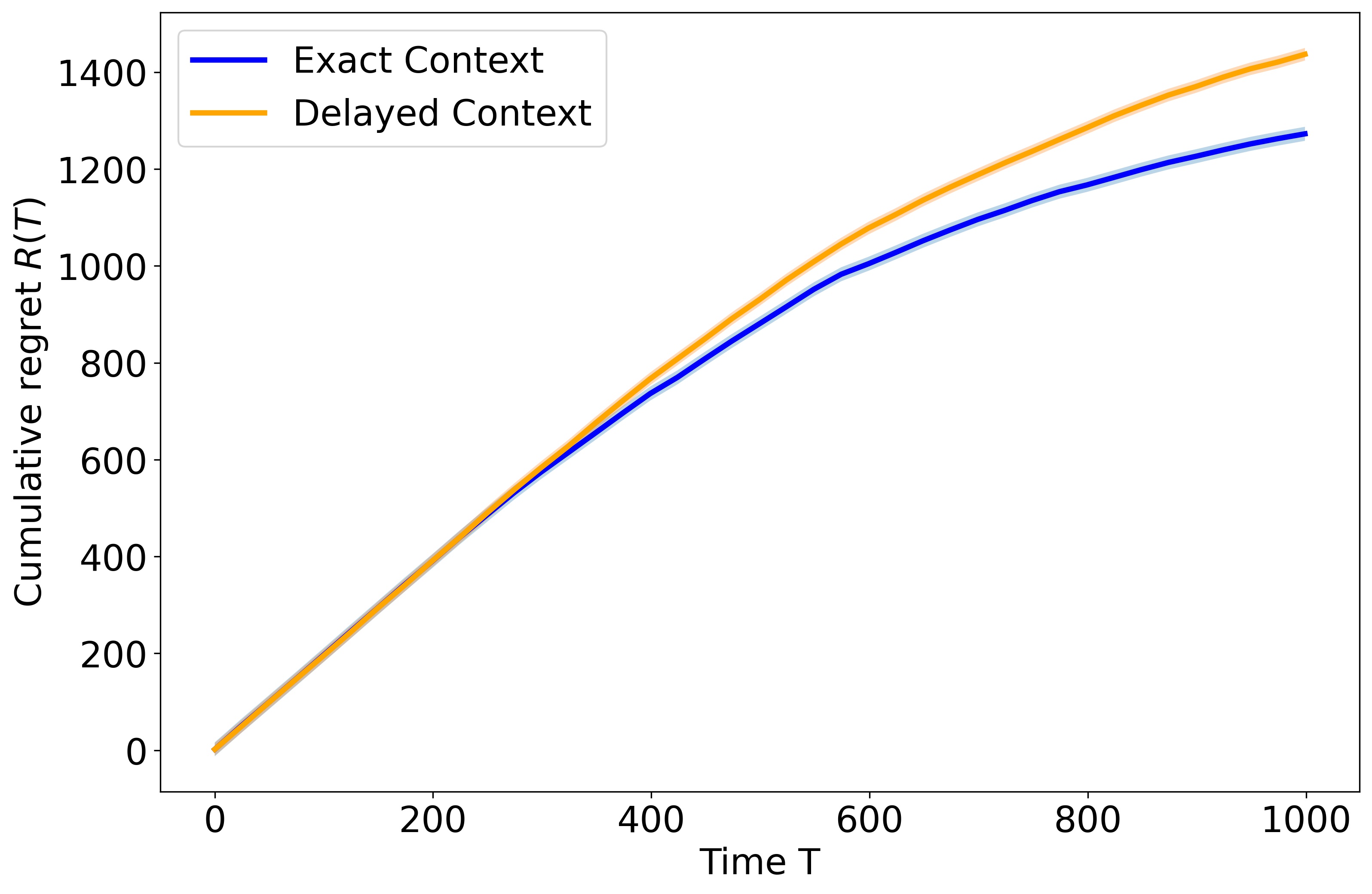}}
\caption{\small Cumulative regret $\R(T)$ versus time $T$ for synthetic data and the movielens data.  We compare the performance of our algorithm (delayed) with the DisLinUCB algorithm in \cite{wang2019distributed} (the variant in which  the actual context is observable, i.e., exact) in Figure~\ref{fig:1} and Figure~\ref{fig:3}  for the synthetic data and the movielens data, respectively. As expected  {\em exact}   outperforms the other two variants, and {\em observed} (the variant which utilizes the context information for estimation but not for choosing the actions) outperforms  {\em delayed}. Figure~\ref{fig:2} compares the cumulative regret for different number of agents for synthetic data, we chose $M=1, 3, 5$.  For $M=1$ our algorithm simplifies to the algorithm in \cite{kirschner2019stochastic}.}\label{fig:Syn}
\end{figure*}

{\bf Communication:} The communication cost for our algorithm follows the approach in \cite{wang2019distributed}. We present it here for completeness. Recall $ R = O(d \log (MT)) $. Let $ \alpha = \sqrt { \frac{BT}{R} } $. This implies that there could be at most $ \lceil T / \alpha\rceil = \lceil \sqrt { \frac{TR}{B} } \rceil$ epochs that contains more than $ \alpha $ time steps. Consider epochs $p$ and $p+1$.  If the $ p$-th epoch contains less than $ \alpha $ time steps, we get
\begin{align*}
\log (\frac{ \det (V_{p+1})}{\det (V_{p})}) (t_{p+1} - t_{p}) & > B\\
\log (\frac{\det (V_{p+1})}{\det (V_{p})}) > \frac{B}{t_{p+1} - t_{p}} &> \frac{B}{\alpha}.
\end{align*}
Further, 
\[
\sum_{p=0}^{P-1} \log ( \frac {\det(V_{p+1})} {\det(V_{p})} ) = \log \frac{\det(V_{P})}{\det(V_{0})} \leqslant R
\]
As a result, there could be at most $ \left\lceil\frac{R}{B / \alpha}\right\rceil = \left\lceil\frac{R \alpha}{B}\right\rceil = \lceil \sqrt { \frac{TR}{B} } \rceil $ epochs with less than $ \alpha $ time steps. Therefore, the total number of epochs is at most
\[
\left\lceil\frac{T}{\alpha}\right\rceil + \left\lceil\frac{R \alpha}{B}\right\rceil = \left\lceil \sqrt { \frac{TR}{B} } \right\rceil + \left\lceil \sqrt { \frac{TR}{B} } \right\rceil = O (\sqrt{\frac{TR}{B}} ) 
\]
By selecting $ B = (\frac{T \log MT}{d M}) $ and $ R = O(d \log M T)) $, the right-hand-side is $ O(M^{0.5} d) $. The agents communicate only at the end of each epoch, when each agent sends $ O(d^{2}) $ numbers to the server, and then receives $ O(d^{2}) $ numbers from the server. Therefore, in each epoch, communication cost is $ O(M d^{2}) $. Hence, total communication cost is $ O(M^{1.5} d^{3}) $.
\end{proof}

\section{Simulation Results}\label{sec:sim}
We evaluated the performance of our  algorithm via numerical experiments using synthetic data and  a real-world Movielens dataset.  All the experiments were done in Python.  We compared two  variants of the problem, based on the observation of the agents. (i)~{\em Exact} in which the agents know the context, i.e.,  the agents observe the actual context before selecting the action (which is the distributed bandit setting considered in \cite{wang2019distributed}).  (ii)~{\em Delayed}  in which the contexts are hidden at the time of selection actions, however, observed afterward (studied in \cite{kirschner2019stochastic}  for single-agent setting). 

\noindent{\bf Synthetic data:} 
In this dataset we generated the data by setting $K = 10$, $d = 3$. We  ensured that the suboptimal reward gap and norm of $\| \phi_{x, c} \|_2$ are in  $[0.2, 0.4]$ and $[0.5, 1]$, respectively. Each point in the plot was averaged over $20$ independent trials. We present the plots showing variations of the cumulative regret with respect to the execution time for synthetic data for different variants in Figure~\ref{fig:1} and for different numbers of agents in Figure~\ref{fig:2}. Our experimental results given in Figure~\ref{fig:1} shows that the exact setting outperforms the delayed setting, as expected since the agents observe the actual contexts. We varied the number of agents as $M = 1,3,5$ and compared the results as shown in Figure~\ref{fig:2}.  We ran for $500$ time-period and $20$ independent trails. 

\noindent{\bf Movielens data:}  We used MovieLens data to evaluate the performance of our algorithm. For the rating matrix $R = [r_{x, c} ]\in \mathbb{R}^{943 \times 1682}$ of the data, we first obtained a non-negative matrix factorization $R = WH$, where $W \in \mathbb{R}^{943 \times 6}$, $H \in \mathbb{R}^{6 \times 1682}$ \cite{bogunovic2021stochastic}. Each row of $W$, $\{W_j^\top\}_{j\in[943]}$, represents a context and each column of $H$, $\{H_k\}_{k\in[1682]}$, represents an action. The feature vector for a given context $W_j\in \mathbb{R}^{6}$ and action $H_k \in \mathbb{R}^{6}$ is given by the  diagonal of the matrix  $W_j H_k^\top$. Hence the feature vector is of dimension $6$ and $\thetas = [1, 1, 1, 1, 1, 1]$. We chose $100$ actions randomly from the action set. We present the plots showing the variation of the cumulative regret with respect to the execution time for the MovieLens data for different settings in Figure~\ref{fig:3}. The reward $r(x_t, c_t)$ is bounded above by $1$, and the observation noise $\eta_i$ is set as Gaussian with zero mean and standard deviation $10^{-3}$. In this experiment, as expected, the  exact setting outperforms the delayed setting. We ran for a time-period of $1000$ and the plots are shown  in Figure~\ref{fig:3}. 

\section{Conclusion}\label{sec:con}
In this work, we  studied distributed stochastic multi-arm contextual bandit problem when the contexts are observed after a delay and only a distribution on the contexts is available at the time of decision. In our distributed setting,  $M$ agents face the same MAB problem and work collaboratively to choose optimal actions  to minimize the total cumulative regret.  We leveraged the feature vector transformation in \cite{kirschner2019stochastic} and proposed a   UCB-based algorithm and proved  regret and communications bounds as $O(d\sqrt{MT}\log^2 T)$  and $ O(M^{1.5} d^{3}) $, respectively, for linearly parametrized reward functions.  To validate the performance of our approach we performed numerical simulations on synthetic data and on Movielens data set. As a part of our future work, we plan investigate the fully-decentralized setting of the problem in which the agents are allowed to communicate only with their neighboring agents defined by a communication network.

\bibliographystyle{myIEEEtran}
\bibliography{../../bib/Bandits}

\end{document}